
\documentclass[letterpaper, 10 pt, conference]{ieeeconf}  

\IEEEoverridecommandlockouts                              

\overrideIEEEmargins                                      



\usepackage{graphics}
\usepackage{amsmath}

\usepackage{amsthm}
\usepackage{amssymb}
\usepackage{mathtools}
\usepackage{bbm}
\usepackage{dsfont}
\usepackage{color}
\usepackage{xcolor}
\definecolor{myblue}{rgb}{0 0.4470 0.7410}
\definecolor{mypurple}{rgb}{0.4940 0.1840 0.5560}
\definecolor{mygreen}{rgb}{0.4660 0.6740 0.1880}
\usepackage[noend]{algpseudocode}
\usepackage[ruled, vlined, linesnumbered]{algorithm2e}
\usepackage{algorithmicx}
\newtheorem{lem}{Lemma}

\newtheorem{prob}{Problem}
\newtheorem{theorem}{Theorem}

\newtheorem{ex}{Example}

\newtheorem{remark}{Remark}
\theoremstyle{remark}

\usepackage{pgfplots}
\pgfplotsset{compat=newest}
\pgfplotsset{plot coordinates/math parser=false}
\newlength\figureheight
\newlength\figurewidth

\newcommand\norm[1]{\left\lVert#1\right\rVert}

\title{\LARGE \bf
Task Hierarchical Control via Null-Space \\Projection and Path Integral Approach 
}

\author{Apurva Patil$^{1}$, Riku Funada$^{2}$, Takashi Tanaka$^{3}$, Luis Sentis$^{3}$\thanks{This work was supported in part by Office of Naval Research (N00014-22-1-2204) and AFOSR Grant FA9550-20-1-0101. $^{1}$Walker Department of Mechanical Engineering, University of Texas at Austin, {\tt\small apurvapatil@utexas.edu}.
$^{2}$Department of Systems and Control, Institute of Science Tokyo, {\tt\small funada@sc.e.titech.ac.jp}.
$^{3}$Department of Aerospace Engineering and Engineering Mechanics, University of Texas at Austin, {\tt\small ttanaka@utexas.edu}, {\tt\small lsentis@utexas.edu
}.}
}

\begin{document}

\maketitle
\thispagestyle{empty}
\pagestyle{empty}

\begin{abstract}

This paper addresses the problem of hierarchical task control, where a robotic system must perform multiple subtasks with varying levels of priority. A commonly used approach for hierarchical control is the null-space projection technique, which ensures that higher-priority tasks are executed without interference from lower-priority ones. While effective, the state-of-the-art implementations of this method rely on low-level controllers, such as PID controllers, which can be prone to suboptimal solutions in complex tasks. This paper presents a novel framework for hierarchical task control, integrating the null-space projection technique with the path integral control method. Our approach leverages Monte Carlo simulations for real-time computation of optimal control inputs, allowing for the seamless integration of simpler PID-like controllers with a more sophisticated optimal control technique.  Through simulation studies, we demonstrate the effectiveness of this combined approach, showing how it overcomes the limitations of traditional methods by optimizing the task performance.

\end{abstract}

\section{Introduction}\label{sec: Introduction}
Robotic systems with a large number of degrees of freedom offer significant versatility; however, this also introduces redundancies. These systems are often used to accomplish multiple subtasks with varying levels of importance, allowing for the establishment of a task hierarchy. One of the most frequently applied methods to accomplish task hierarchical control is the null-space projection technique \cite{antonelli2008null, khatib1987unified, slotine1991general}. In the null-space projection, the top priority task is executed by employing all the capabilities of the system. The second priority task is then applied to the null space of the top priority task. In other words, the task on the second level is executed as well as possible without disturbing or interfering with the first level. The task on level three is then executed without disturbing the two higher-priority tasks, and so forth \cite{dietrich2015overview}. The null-space projection technique is based on a hierarchical arrangement of the involved tasks and can be interpreted as instantaneous local optimization. \par

The null-space projection technique has been widely applied, such as in multi-robot team control \cite{sentis2009large}, whole-body behavior synthesis \cite{sentis2005synthesis}, and manipulator control \cite{dietrich2015overview}. A comprehensive overview and comparison of null-space projections is given in \cite{dietrich2015overview}. In the literature \cite{antonelli2008null, dietrich2015overview}, the individual controllers for the tasks in the hierarchy are designed using simple low-level controllers such as proportional-integral-derivative (PID) controllers. While these controllers are easy to design and are capable of generating control inputs in real time, there is no systematic way to optimize the overall performance of the hierarchy of controllers.  
Global optimization techniques such as dynamic programming on the other hand minimize some performance index across a whole trajectory. For example, to design optimal control policies for a linear system with a quadratic cost function, the linear quadratic regulator (LQR) is widely used. For nonlinear systems and cost functions, approaches such as iterative LQR (ILQR), and differential dynamic programming (DDP) can be utilized. However, even though global optimization solutions perform better than local optimization solutions, they are impractical for online feedback control, due to the heavy computational requirements. In this paper, we employ the path integral control method, a stochastic optimal control framework that can be applied to nonlinear systems and enables the computation of optimal control inputs in real time through Monte Carlo simulations. \par

Based on the foundational work of \cite{kappen2005path, theodorou2010generalized}, the path integral method can be defined as a sampling-based algorithm to compute control input at each time step from a large number of Monte-Carlo simulations of the \emph{uncontrolled dynamics}. Unlike traditional optimal control methods, the path integral approach can directly deal with stochasticity and nonlinearity \cite{vrushabh2020robust, patil2022chance}, \cite{patil2023simulator, patil2023risk}. Moreover, unlike dynamic programming, it can evaluate control input without solving a high-dimensional Hamilton-Jacobi-Bellman partial differential equation \cite{patil2022chance}. The Monte Carlo simulations can also be highly parallelized, leveraging the GPU resources available on modern robotics platforms \cite{williams2017model}, making it particularly effective for real-time control applications.\par


This paper integrates the path integral control approach with the null-space projection technique to overcome their individual limitations and enhance their respective strengths. We explain our idea via the following example:

\begin{ex}

Consider a platoon of robots navigating in an obstacle-filled environment, tasked with three goals in descending order of importance:
\begin{enumerate}
    \item Avoiding collisions with obstacles
    \item Steering the platoon’s centroid toward a goal position
    \item Maintaining specific distances between the robots
\end{enumerate}
Designing an optimal controller using only the path integral method for each robot in the platoon would present scalability challenges due to the method's sampling-based nature. On the other hand, simple low-level controllers (such as PID), while computationally efficient, are difficult to tune manually for a better performance. We propose using local controllers for tasks 1 and 3 while applying the path integral controller to the more complex task 2. In this way, the path integral controller optimizes the centroid’s movement, while simpler tasks are handled by local controllers.
\end{ex} 

In \cite{sentis2009large}, the authors address the issue of the task hierarchical controller falling into local minima by complementing the low-level controllers with the A* search algorithm. In graph-based methods like A*, the optimal trajectory is generated in the workspace, requiring an additional tracking controller to be designed separately, taking into account the system's dynamics. In contrast, the path integral controller computes the optimal policy directly, eliminating the need for such a decoupled approach. Furthermore, path integral controllers can adapt in real time making them particularly suitable for dynamic environments. Graph-based algorithms like A* are primarily designed for static environments and are less effective in dynamic scenarios, as they require re-planning the trajectory each time the environment changes. Although there are graph-based algorithms, such as RRTX, capable of handling dynamic environments \cite{otte2016rrtx}, these approaches tend to have higher computational costs compared to static planners and often require substantial memory resources as the number of samples increases.\par

The contributions of this paper are as follows:
 (1) We introduce a new framework for hierarchical task control that combines the null-space projection technique with the path integral control method. This leverages Monte Carlo simulations for real-time computation of optimal control inputs, allowing for the seamless integration of simpler PID-like controllers with a more sophisticated optimal control technique. (2) Despite the wide applicability of the path integral approach, it has not been utilized for solving the task hierarchical control problem to the best of the authors' knowledge. This expands the applicability of path integral control to multi-task robotic systems, enabling a more robust handling of task prioritization. (3) Our simulation studies demonstrate the effectiveness of the proposed approach, showing how it overcomes the limitations of the state-of-the-art methods by optimizing task performance.
  
\section{Preliminaries}\label{Sec: preliminaries}
 Let $q\in\mathbb{R}^n$ be the configuration of a robot, where $n$ is the number of degrees-of-freedom (DOFs). We introduce $K$ task variables 
\begin{equation}\label{task function}
  \sigma_k = h_k(q)\in\mathbb{R}^{m_k}, \quad k\in \mathcal{K} 
\end{equation}
 where $m_k$ is the dimension of task $k$ and $\mathcal{K}=\{1, 2, ... , K\}$ denotes a set of the indices. The hierarchy is defined such that $\sigma_{1}$ is at the top priority and $\sigma_i$ is located higher in the priority order than $\sigma_j$ if $i<j $. The task variables $\sigma_k$ represent functional quantities (e.g., a cost or a potential function) as part of the desired actions, and $h_k$ is a differentiable nonlinear function. Our goal is to devise a policy for the robotic system that would accomplish the $K$ subtasks $\sigma_k, k\in \mathcal{K}$ in their descending order of importance. A task $k$ gets accomplished if $\sigma_k$ converges to the desired task trajectory $\sigma_{k,d}$. In the rest of the paper, for notational compactness, the functional dependency on $q$ is dropped whenever it is unambiguous. \par

Differentiating \eqref{task function} with respect to time, we get 
\begin{equation}\label{Jacobian}
    \dot{\sigma}_k = J_k(q)\dot{q}, \qquad J_k(q) = \frac {\partial h_k(q)}{\partial q}
\end{equation}
where $J_k(q)\in\mathbb{R}^{m_k\times n}$ is the Jacobian matrix and $\dot{q}$ represents the velocity of the robot in the configuration space. This velocity can be computed by inverting the mapping \eqref{Jacobian} \cite{siciliano1990kinematic}, \cite{antonelli2009stability}, \cite{antonelli2009prioritized}. However, in the case of a redundant system i.e., when $n>m_k$, the problem \eqref{Jacobian} admits infinite solutions. A common approach is to solve for the minimum-norm velocity, which leads to the least-squares solution $\dot{q} = J^\dag_k(q)\dot{\sigma}_k$ where $J_k^\dag(q) = J_k^\top(q)(J_k(q)J_k^\top(q))^{-1}$ is the right pseudo-inverse of the Jacobian matrix $J_k(q)$. In the following $J_k(q)$ is assumed to be non-singular, hence of full row rank. Similar to \eqref{Jacobian}, the acceleration in the configuration space can be computed by further differentiating \eqref{Jacobian}: $ \ddot{\sigma}_k = J_k(q)\ddot{q} + \dot{J}_k(q)\dot{q}.$ The minimum-norm solution for the acceleration $\ddot{q}$ is obtained as:
\begin{equation}\label{q ddot}
   \ddot{q} = J^\dag_k(q) \left(\ddot{\sigma}_k - \dot{J}_k(q)\dot{q} \right). 
\end{equation}
Equation \eqref{q ddot} provides a basic method to compute system acceleration in an open-loop style. To improve convergence, a feedback term is added to \eqref{q ddot}, as suggested by \cite{siciliano1990kinematic} and \cite{tsai1987strictly}, leading to the expanded form:
\begin{equation}\label{q ddot PD}
   \ddot{q} = J^\dag_k(q) \left(\left\{\ddot{\sigma}_{k,d} + K_{p,k}\widetilde{\sigma}_k + K_{d,k}\frac{d\widetilde{\sigma}_k}{dt}\right\} -\dot{J}_k(q)\dot{q} \right) 
\end{equation}
where $\widetilde{\sigma}_k = \sigma_{k,d} - \sigma_{k}$ denotes the error between the desired task trajectory $\sigma_{k,d}$ and the actual task trajectory $\sigma_{k}$ which can be computed from the system's current configurations using \eqref{task function}. For task $k$, the terms $K_{p,k}$ and $K_{d,k}$ are proportional and derivative gains, respectively, which shape the convergence of the error $\widetilde{\sigma}_k$. Equation \eqref{q ddot PD} is called a \textit{closed loop inverse kinematic} version of the equation \eqref{q ddot}. The controller having a similar structure is presented in \cite{siciliano1990kinematic}.\par
Note that in the above control input computation technique, the desired task trajectory $\sigma_{k,d}$ is often chosen manually. Due to the complexity of the architecture, it is often difficult to select an appropriate desired trajectory $\sigma_{k,d}$ and low-level controller gains $K_{p,k}$, $K_{d,k}$ to optimize the overall system performance. Moreover, the above method is only a local optimization technique as opposed to global optimization techniques which minimize some performance index across a whole trajectory and typically offer better solutions compared to local optimization approaches.

\section{Null-Space Projection}\label{Sec: task hierarchy}
Consider a robotic system with a control-affine dynamics:
 \begin{equation}\label{double integrator}
     \dot{x} = f(x) + G(x)u
 \end{equation}
where $x$ is the state of the system
, $u$ is the control input, $f(x)$ is a drift term, and $G(x)$ is the control coefficient. In the rest of the paper, for notational compactness, the functional dependencies on $x$, and $t$ are dropped whenever it is unambiguous. In the null-space projection technique, the control input $u_2$ corresponding to the second-priority task is projected onto the null space of the primary task using the formula:
\begin{equation}\label{a2_proj}
    u_2' = N_2^{}(q)u_2
\end{equation}
where $u_2'$ is the projected control input that does not interfere with the primary task. The null-space projector $N_2^{}(q)$ is obtained by evaluating $N_2^{}(q) = I - J_1^\dag(q) J_1(q)$,
where $J_1^\dag(q)$ is the right pseudo-inverse of the primary task’s Jacobian $J_1$, and $I$ is an identity matrix of suitable dimensions. Analogous to \eqref{a2_proj}, for the remaining tasks in the hierarchy ($2<k\leq K$), the control inputs are projected as $u_k' = N_k^{}(q)u_k$, with the null-space projectors recursively computed as:
\begin{align*}
    N_k^{}(q) & = N_{k-1}^{}(q)\left(I-J_{k-1}^\dag(q) J_{k-1}(q)\right)\quad 2\leq k\leq K
\end{align*}
and $N_1(q) = I$. Here $I$ is the identity matrix with suitable dimensions. Each task input is computed as if it were acting alone; then before adding its contribution to the overall system control input, a lower-priority task is projected onto the null space of the immediately higher-priority task so as to remove those control input components that would conflict with it. This technique guarantees that lower-priority objectives are constrained and therefore do not interfere with higher-priority objectives. As a result, the high-priority task is always achieved, and the lower ones are met only if they do not conflict with the task of higher priority. The final control input can be formulated by adding up the primary task control input and all the projected control inputs: 
\begin{align}\label{final acc}
    u = u_1 + \sum_{k=2}^K u_k' =  \sum_{k=1}^K N_k^{}(q)u_k. 
\end{align}
Plugging \eqref{final acc} into \eqref{double integrator}, we get
\begin{equation}\label{double integrator 2}
   \dot{x}(t) = f(x) + G(x)\sum_{k\in \mathcal{K}} N_k^{}(q)u_k.\end{equation}


A natural question arises of how many tasks can be handled simultaneously using this approach. Let us suppose that the primary task, of dimension $m_1$ is fulfilled by $n$ DOFs robotic system. The null space of its Jacobian (of full row rank) is a space of dimensions $n-m_1$. Supposing the secondary task of dimension $m_2$ does not conflict with the primary task (meaning that the secondary task acts in the null space of the primary task), the null space of their combination has dimension $n-m_1-m_2$. Choosing the tasks in a way they are not conflicting, it is useful to add tasks until $\sum m_k = n$. Thus, once all the degrees of freedom of the system are covered, it is useless to add successive tasks of lower priority since they will be projected onto an empty space (thus giving always a null contribution to the system control input). In case of conflicting tasks, it is not possible to make any generic assumption regarding the useful number of tasks but, case by case, the intersection among null spaces should be analyzed.

\section{Integration of Null Space Projection and Path Integral Control}

\begin{figure*}
    \centering
      \begin{tabular}{c c}
     \includegraphics[scale=0.31]{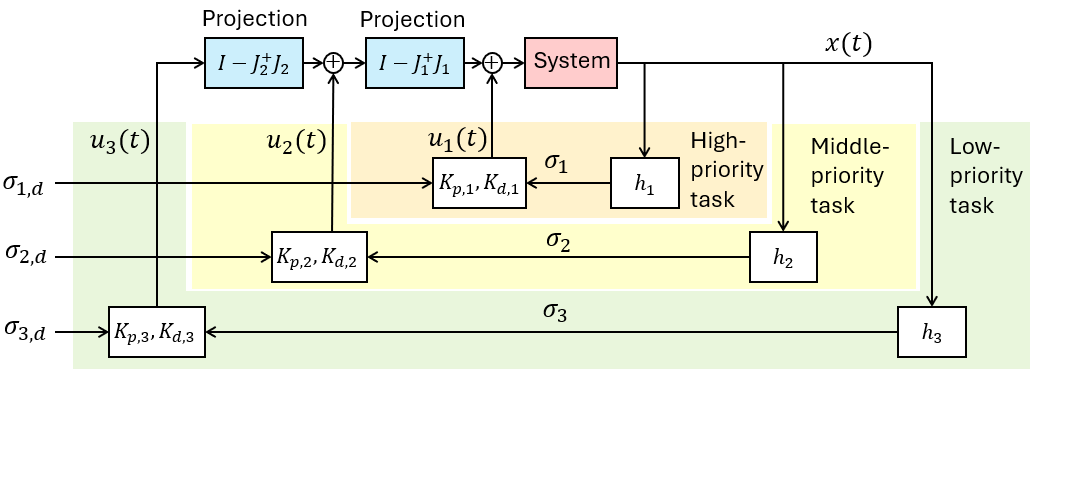} &\!\!\!\!\!\!\includegraphics[scale=0.31]{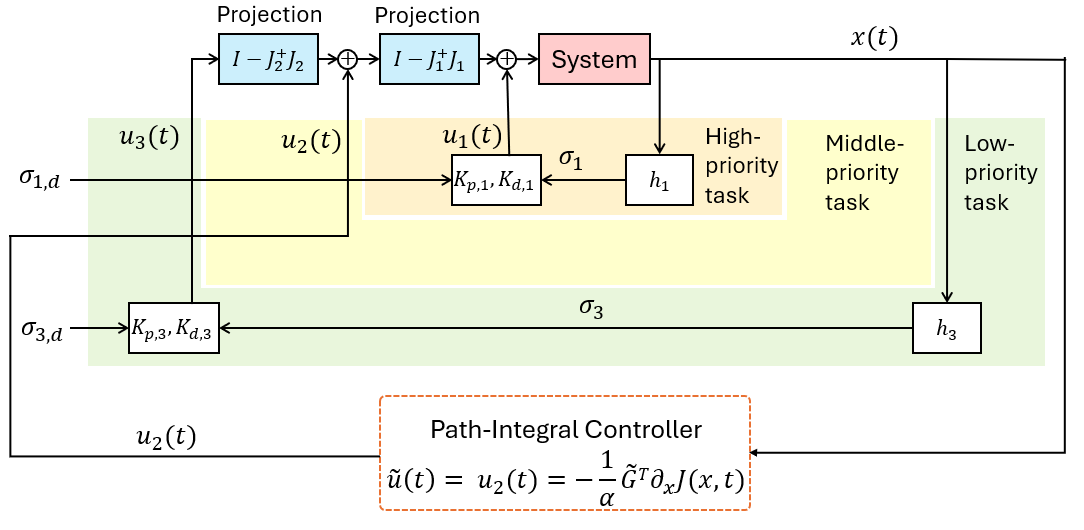} \\
     (a) Conventional task hierarchical control &(b) Integration of null-space projection and \\
     &  path integral control (proposed architecture)
      \end{tabular}
        \caption{Comparison of the conventional task hierarchical control approach and our proposed control approach, which integrates null-space projection with path integral control.} 
        \label{Fig. comparison}
\end{figure*}
In this section, we explain our proposed architecture. Consider a scenario where we need to accomplish $K$ tasks, however, only the $k$-th task is handled using a path integral controller. Here, we assume only one task is accomplished using the path integral controller. A similar formulation can be extended to cases where multiple tasks are controlled using path integral controllers. The control inputs for the other tasks $u_i, i\neq k$ are designed using PD controllers \eqref{q ddot PD}. Using \eqref{double integrator 2}, the system dynamics can be written as 
\begin{equation}\label{double integrator 3}
  \dot{x}(t) = f(x) + \!\!\!\sum_{i\in\mathcal{K}, i\neq k} \!\!\!\!G(x)N_iu_i(t) + G(x)N_ku_k(t).  
\end{equation}
Defining $\widetilde{G}(x) = G(x)N_k$, $\widetilde{u} = u_k$ and
\begin{align*}
    \widetilde{f}(x)& = f(x)+\sum_{i\in\mathcal{K}, i\neq k}\!\!\!\!G(x)N_iu_i(t),  
\end{align*}
we can rewrite \eqref{double integrator 3} as $d{x} = \widetilde{f}(x)dt + \widetilde{G}(x) \widetilde{u}\,dt.$ Note that the terms $\widetilde{f}(x)$ and $\widetilde{G}(x)$ are computed using the PD-like low-level controllers \eqref{q ddot PD} derived in Section \ref{Sec: preliminaries} and the null-space projection explained in Section \ref{Sec: task hierarchy}. Next, we derive a path integral controller for the control input $\widetilde{u}$. \par
First, we perturb $\widetilde{u}$ by adding a stochastic term: $\widetilde{u}\,dt \rightarrow \widetilde{u}\,dt + \hat{s}\,dw(t)$, where $w(t)$ is a standard Brownian motion and $\hat{s}\in \mathbb{R}$ is the diffusion coefficient. This perturbation allows the system to explore the policy space and discover a direction that minimizes the cost. The perturbed system dynamics are then given by 
\begin{equation}\label{SDE}
     d{x} = \widetilde{f}(x)dt + \widetilde{G}(x) \left(\widetilde{u}\,dt + \hat{s}\,dw(t)\right).
\end{equation}
Now we formulate the path integral control problem with the cost function: 
\begin{equation}\label{PI cost}
    \mathbb{E}_Q\left[\phi(x(T)) + \int_0^T  \left(L(x(t))+\frac{\alpha}{2}\|\widetilde{u}\|^2\right)dt\right]
\end{equation}
for some $\alpha>0$. The expectation $\mathbb{E}_Q$ is taken with respect to the dynamics \eqref{SDE}. The cost function has a quadratic control cost, an arbitrary state-dependent running cost $L(x(t))$, and a terminal cost $\phi(x(T))$. Consider the example of a platoon of robots illustrated in Section \ref{sec: Introduction}. Suppose task 2 (steering the platoon’s centroid toward a goal position) is controlled by the path integral controller. Let there be $I$ robots, with the configuration of each robot be denoted by $q_i, 1\leq i\leq I$, and the goal position be $q_g$. In this case, the running cost $L(x(t))$ could be $\|\sum_{i=1}^{I}\frac{1}{I}q_i(t)-q_g\|^2$, and the terminal cost $\phi(x(T))$ could be $\|\sum_{i=1}^{I}\frac{1}{I}q_i(T)-q_g\|^2$.\par
We now formally state our problem:
\begin{prob}\label{PI problem}
   \begin{align*}
       \min_{\widetilde{u}}\; & \mathbb{E}_{Q}\left[\phi(x(T))+\int_0^T \left(L(x(t))+\frac{\alpha}{2}\|\widetilde{u}\|^2\right)dt\right]\nonumber\\
       \emph{s.t.} \;\; &  d{x} = \widetilde{f}(x)dt + \widetilde{G}(x) \left(\widetilde{u}\,dt + \hat{s}\,dw(t)\right).
   \end{align*}
\end{prob}
It is well-known that the value function $J(x,t)$ of Problem \ref{PI problem} satisfies the Hamilton-Jacobi-Bellman (HJB) partial differential equation (PDE) \cite{williams2017model}: 
 \begin{equation}\label{HJB PDE}
  \begin{aligned}
         -\partial_tJ\!=&\!-\frac{1}{2\alpha}\!\left(\partial_xJ\right)^\top\!\!\widetilde{G}\widetilde{G}^\top\!\partial_xJ\!+\!L
         +\!\!\widetilde{f}^\top\!\partial_xJ\\
         &+\frac{\hat{s}^2}{2}\text{Tr}\left(\widetilde{G}\widetilde{G}^\top\partial^2_xJ\right),
          \end{aligned}
    \end{equation}
    with the boundary condition $J(x(T), T)=\phi(x(T))$. The optimal control is expressed in terms of the solution to this PDE as follows:
    \begin{equation}\label{optimal policy}
        \widetilde{u}(x,t) = -\frac{1}{\alpha}{\widetilde{G}}^\top\partial_xJ\left(x, t\right).
    \end{equation}
Therefore, in order to compute the optimal controls, we need to solve this backward-in-time PDE \eqref{HJB PDE}. Unfortunately, classical methods for solving partial differential equations (such as the finite difference method) of this nature suffer from the curse of dimensionality and are intractable for systems with more than a few state variables. The path integral control framework provides an alternative approach by transforming the HJB PDE into a path integral. This transformation allows us to approximate the solution using Monte Carlo simulations of the system's stochastic dynamics. We use the Feynman-Kac formula, which relates PDEs to path integrals \cite{williams2017model}. First, we define a constant $\lambda$ such that:
    \begin{equation}\label{lambda}
        \hat{s}^2 = \frac{\lambda}{\alpha}.
    \end{equation}
Next, using this constant $\lambda$, we introduce the following transformed value function $\xi(x,t)$:

\begin{equation}\label{exp transformation}
 J(x,t) = -\lambda\,\text{log}\left(\xi\left(x,t\right)\right).
\end{equation}
The transformation (\ref{exp transformation}) and \eqref{lambda} allow us to write the HJB PDE \eqref{HJB PDE} as a linear PDE in terms of $\xi(x,t)$ as
\begin{equation}\label{linear PDE}
    \partial_t\xi\!=\!\frac{L\xi}{\lambda}\!-\!\widetilde{f}^\top\partial_x\xi-\frac{\hat{s}^2}{2}\text{Tr}\left(\widetilde{G}\widetilde{G}^\top\partial^2_x\xi\right)
\end{equation}
with the boundary condition $\xi(x(T),T) =\exp\left(-\frac{\phi(x(T))}{\lambda}\right)$.
    This particular PDE is known as the \textit{backward Chapman–Kolmogorov PDE}. Now we find the solution of the linearized PDE \eqref{linear PDE} using the Feynman-Kac lemma.
\begin{lem}[Feynman-Kac lemma]\label{theorem: Feynman-Kac}
    The solution to the linear PDE \eqref{linear PDE} exists. Moreover, the solution is unique in the sense that $\xi$ solving \eqref{linear PDE} is given by \begin{equation}\label{xi}
       \begin{aligned}
  \!\!\!\!\!\!\xi\!\left(x,t\right) & \!= \!\mathbb{E}_{P}\!\! \left[\exp\!\left(\!\!-\frac{1}{\lambda}S(x,t)\!\!\right)\!\right]\\
  \end{aligned}
    \end{equation}
    where the expectation $\mathbb{E}_P$ is taken with respect to the uncontrolled dynamics of the system \eqref{SDE} (i.e., equation \eqref{SDE} with $\widetilde{u}=0$) starting at $x,t$. $S(x,t)$ is the cost to go of the state-dependent cost of a trajectory given by  
    \begin{equation*}
        S(x,t) = {\phi\left({{x}}({{T}})\right)}\!+\!\!\int_{t}^{{{T}}}\!\!\!\!L\!\left({{x}}(t)\right)\!dt .
    \end{equation*}
\end{lem}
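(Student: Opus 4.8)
The plan is to establish the representation \eqref{xi} by a martingale argument built on Itô's lemma, rather than by solving the PDE \eqref{linear PDE} directly. I would treat the \emph{existence} of a classical (i.e.\ $C^{2,1}$) solution as a regularity fact to be cited, and then show that any such solution is \emph{forced} to coincide with the stated expectation, which simultaneously yields the formula and the uniqueness. To set up, fix a starting pair $(x,t)$ and let $X(s)$, $s\in[t,T]$, be a trajectory of the \emph{uncontrolled} dynamics $dX = \widetilde{f}(X)\,ds + \hat{s}\,\widetilde{G}(X)\,dw$ (i.e.\ \eqref{SDE} with $\widetilde{u}=0$) started at $X(t)=x$. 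Introduce the finite-variation discount factor $Z(s) = \exp\!\big(-\tfrac{1}{\lambda}\int_t^s L(X(\tau))\,d\tau\big)$, so that $Z(t)=1$ and $dZ = -\tfrac{1}{\lambda}L(X(s))Z(s)\,ds$, and define the auxiliary process $Y(s) = \xi(X(s),s)\,Z(s)$.

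Next I would apply Itô's lemma to $\xi(X(s),s)$, using the quadratic variation $dX\,dX^\top = \hat{s}^2\widetilde{G}\widetilde{G}^\top\,ds$, and then the Itô product rule to $Y=\xi Z$; the cross term $d\xi\,dZ$ vanishes because $dZ$ carries no Brownian part. Collecting the finite-variation terms, the drift of $dY$ equals $Z\big[\partial_t\xi + \widetilde{f}^\top\partial_x\xi + \tfrac{\hat{s}^2}{2}\text{Tr}(\widetilde{G}\widetilde{G}^\top\partial_x^2\xi) - \tfrac{L}{\lambda}\xi\big]\,ds$, which is identically zero precisely because $\xi$ satisfies \eqref{linear PDE}. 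Hence $dY = Z(\partial_x\xi)^\top\widetilde{G}\hat{s}\,dw$ has no drift, so $Y$ is a local martingale on $[t,T]$.

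Assuming the integrability needed to upgrade $Y$ to a true martingale, I would then take expectations and use $Y(t)=\mathbb{E}_P[Y(T)]$. Because $Z(t)=1$ we have $Y(t)=\xi(x,t)$, while the terminal condition $\xi(X(T),T)=\exp(-\phi(X(T))/\lambda)$ gives $Y(T)=\exp\!\big(-\tfrac{1}{\lambda}(\phi(X(T))+\int_t^T L\,d\tau)\big)=\exp(-\tfrac{1}{\lambda}S(x,t))$. Combining these yields $\xi(x,t)=\mathbb{E}_P[\exp(-\tfrac{1}{\lambda}S(x,t))]$, which is exactly \eqref{xi}. Since the right-hand side is a fixed expectation independent of which solution we started from, the same chain of equalities shows that any two classical solutions of \eqref{linear PDE} with the given terminal data must coincide, giving the asserted uniqueness.

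The formal Itô computation is routine; the real difficulty lies in the analytic hypotheses that make it rigorous. First, guaranteeing existence of a $C^{2,1}$ solution $\xi$ requires structural conditions on $\widetilde{f},\widetilde{G},L,\phi$ --- typically uniform ellipticity (or a H\"ormander-type condition) together with H\"older continuity and polynomial growth bounds --- and I would import this from standard parabolic-PDE/SDE theory rather than prove it here. Second, promoting the local martingale $\int_t^\cdot Z(\partial_x\xi)^\top\widetilde{G}\hat{s}\,dw$ to a genuine martingale demands an $L^2$ (or exponential-integrability) bound on its integrand over $[t,T]$, which in turn rests on growth control of $\partial_x\xi$ and $\widetilde{G}$. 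These integrability and regularity assumptions, which I would state as standing hypotheses, are the crux; once they hold, the martingale identity closes the argument immediately.
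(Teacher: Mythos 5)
Your proposal is correct and coincides with the paper's approach in substance: the paper's entire proof is a one-line citation to \cite[Theorem 9.1.1]{oksendal2013stochastic}, and the It\^o-plus-martingale verification argument you lay out (drift cancellation via the PDE, discount factor $Z$, taking expectations at the terminal time) is exactly the standard proof behind that citation, so you have simply filled in what the paper delegates. Your closing caveats are well placed --- in particular, since $\widetilde{G}$ here has a zero block in the partitioned dynamics, the diffusion is degenerate and uniform ellipticity fails, so the H\"ormander-type condition you mention (rather than ellipticity) is the hypothesis actually relevant to the existence half of the lemma, a point the paper passes over silently.
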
 
\begin{proof}
    The proof follows from \cite[Theorem 9.1.1]{oksendal2013stochastic}.
\end{proof}
We now obtain the expression for the optimal control policy for Problem \ref{PI problem} via Theorem \ref{thm: optimal_sol}. Before stating Theorem \ref{thm: optimal_sol}, we partition the system dynamics $   d{x} = \widetilde{f}(x)dt + \widetilde{G}(x) \left(\widetilde{u}\,dt + \hat{s}\,dw(t)\right)$ into subsystems that are directly and non-directly driven by the noise as:
\begin{equation}
\begin{bmatrix}
    dx^{(1)}\\dx^{(2)}
\end{bmatrix} = 
\begin{bmatrix}
    \widetilde{f}^{(1)}(x)\\\widetilde{f}^{(2)}(x)
\end{bmatrix}dt + \begin{bmatrix}
    \mathbf{0}\\\widetilde{G}^{(2)}(x)
\end{bmatrix}\left(\widetilde{u}\,dt + \hat{s}\,dw(t)\right).
\end{equation}

\begin{theorem}\label{thm: optimal_sol}
The optimal solution of Problem \ref{PI problem} exists, is unique and is given by
\begin{equation}\label{path integral control}
 \widetilde{u}^*(x,t)dt=\mathcal{G}\left(x\right)\frac{\mathbb{E}_{P}\left[\exp{\left(-\frac{1}{\lambda}S\right)}\hat{s}\,\widetilde{G}^{(2)}(x)\left(x\right)d{w}(t)\right]}{\mathbb{E}_{P}\left[\exp{\left(-\frac{1}{\lambda}S\right)}\right]}, 
\end{equation}
where the matrix $\mathcal{G}\left(x\right)$ is defined as $\mathcal{{G}}\!\left(x\right)\!=\widetilde{G}^{{(2)}^{\top}}\!(x)\!\left(\widetilde{G}^{(2)}(x)\widetilde{G}^{{(2)}^{\top}}\!(x)\right)^{-1}. $
\end{theorem}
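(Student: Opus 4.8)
The plan is to combine the pointwise optimality condition \eqref{optimal policy} with the Feynman--Kac representation of Lemma \ref{theorem: Feynman-Kac}, converting the gradient of the value function into an expectation over uncontrolled sample paths. Existence and uniqueness come essentially for free: Lemma \ref{theorem: Feynman-Kac} guarantees that $\xi$ exists and is the unique solution of \eqref{linear PDE}, hence $J=-\lambda\log\xi$ is the unique solution of the HJB PDE \eqref{HJB PDE}; since the control cost in Problem \ref{PI problem} is strictly convex in $\widetilde{u}$ (as $\alpha>0$), the Hamiltonian minimizer \eqref{optimal policy} is the unique optimal control. The substantive content of the theorem is therefore to turn \eqref{optimal policy} into the sampling formula \eqref{path integral control}.

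First I would substitute the exponential transformation \eqref{exp transformation} into \eqref{optimal policy}. Since $\partial_x J = -\lambda\,\partial_x\xi/\xi$ and $\hat{s}^2=\lambda/\alpha$ by \eqref{lambda}, this yields the compact intermediate form $\widetilde{u}^*(x,t)=\hat{s}^2\,\widetilde{G}^\top(x)\,\partial_x\xi/\xi$. Next I would exploit the partition of the dynamics into noise-free and noise-driven blocks: because $\widetilde{G}=[\mathbf{0};\widetilde{G}^{(2)}]$, only the derivative with respect to the directly actuated coordinates survives, i.e. $\widetilde{G}^\top\partial_x\xi=\widetilde{G}^{(2)\top}\partial_{x^{(2)}}\xi$. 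It then remains to express $\partial_{x^{(2)}}\xi$ itself as an expectation.

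The crux is computing the gradient of $\xi(x,t)=\mathbb{E}_P[\exp(-S(x,t)/\lambda)]$ with respect to the initial condition $x^{(2)}$. I would proceed in discrete time with step $\Delta t$, writing the uncontrolled transition over the first interval as a Gaussian whose mean depends on $x^{(2)}$ and whose covariance is $\hat{s}^2\Delta t\,\widetilde{G}^{(2)}\widetilde{G}^{(2)\top}$. Using the Markov property to write $\xi(x,t)=\mathbb{E}[\,\xi(x(t+\Delta t),t+\Delta t)\,]$ and then applying the likelihood-ratio (score-function) identity, differentiation hits only the first transition density and produces the score $\partial_{x^{(2)}}\log p=(\hat{s}\Delta t)^{-1}(\widetilde{G}^{(2)}\widetilde{G}^{(2)\top})^{-1}\widetilde{G}^{(2)}\Delta w$, where $\Delta w$ is the first Brownian increment. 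This gives $\partial_{x^{(2)}}\xi=\mathbb{E}_P[\exp(-S/\lambda)\,(\hat{s}\Delta t)^{-1}(\widetilde{G}^{(2)}\widetilde{G}^{(2)\top})^{-1}\widetilde{G}^{(2)}\Delta w]$. Substituting into the intermediate form, the scalar factors collapse ($\hat{s}^2\cdot(\hat{s}\Delta t)^{-1}=\hat{s}/\Delta t$), the matrix $\widetilde{G}^{(2)\top}(\widetilde{G}^{(2)}\widetilde{G}^{(2)\top})^{-1}$ assembles into $\mathcal{G}(x)$, and multiplying through by $dt$ and passing to the limit $\Delta t\to dt$, $\Delta w\to dw$ delivers \eqref{path integral control}.

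The main obstacle I anticipate is the gradient step: justifying the interchange of differentiation and expectation and controlling the fact that the entire downstream trajectory depends on the perturbed initial state. The Markov/tower-property reduction is what resolves this cleanly, since conditioning on $x(t+\Delta t)$ makes the remaining path cost independent of $x^{(2)}(t)$, so only the first-step density carries the $x^{(2)}$-dependence; the continuous-time limit (equivalently, a Bismut--Elworthy--Li / Malliavin integration-by-parts argument) then requires the diffusion to be nondegenerate in the actuated coordinates, which is precisely the full-row-rank assumption on $\widetilde{G}^{(2)}$ that makes $\widetilde{G}^{(2)}\widetilde{G}^{(2)\top}$ invertible.
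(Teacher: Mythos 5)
Your proposal is correct and takes essentially the same route as the paper: existence and uniqueness are inherited from the Feynman--Kac lemma, and \eqref{path integral control} is obtained by combining \eqref{optimal policy}, \eqref{exp transformation}, and \eqref{lambda} with the gradient of \eqref{xi} with respect to $x$. The paper delegates that gradient computation to \cite{theodorou2010generalized, williams2017model}, and your discretize-then-likelihood-ratio calculation (with the tower-property reduction to the first transition density and the full-row-rank condition on $\widetilde{G}^{(2)}$) is precisely the argument carried out in those references, so you have simply supplied the details the paper cites.
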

\begin{proof}
   The existence and uniqueness of the optimal solution follow from the existence and uniqueness of the linear PDE \eqref{linear PDE} (Theorem \ref{theorem: Feynman-Kac}). The solution \eqref{path integral control} can be computed by taking the gradient of (\ref{xi}) with respect to $x$ \cite{theodorou2010generalized, williams2017model}. 
\end{proof}

To evaluate expectations in (\ref{xi}) and (\ref{path integral control}) numerically, we discretize the uncontrolled dynamics and use Monte Carlo sampling \cite{williams2017model}. After discretizing the uncontrolled dynamics, we get $ x_{t+1} = x_t + \widetilde{f}(x_t)\Delta t + \hat{s}\,\widetilde{G}(x_t)\epsilon\sqrt{\Delta t}$, where the term $\epsilon$ is a time-varying vector of standard normal Gaussian random variables and $\Delta t$ is the step size. The term $S(x,t)$ is approximately given as $ S(x,t) \approx {\phi\left({{x_T}}\right)}\!+\!\!\sum_{i=1}^{{{\tau}}}L\!\left({{x_t}}\right)\!\Delta t $, where $\tau=\frac{T-t}{\Delta t}$. Now suppose we generate $M$ samples of trajectories. Then we can approximate \eqref{path integral control} as 
\begin{equation}\label{u after discretization}
   \!\!\! \widetilde{u}_t^*(x_t)\approx\mathcal{G}\left(x_t\right)\frac{\sum_{j=1}^{M}\left[\exp{\left(-\frac{1}{\lambda}S(x,t)\right)}\hat{s}\,\widetilde{G}\left(x_t\right)\epsilon\right]}{\sum_{j=1}^{M}\left[\exp{\left(-\frac{1}{\lambda}S(x,t)\right)}\sqrt{\Delta t}\right]}.  
\end{equation}
Fig. \ref{Fig. comparison} shows the comparison of the conventional task hierarchical control approach and our proposed control approach, which integrates null-space projection with path integral control.
\begin{remark}
    Note that as the number of samples $M\rightarrow\infty$, \eqref{u after discretization} $\rightarrow$ \eqref{path integral control} i.e., path integral controller yields a globally optimal policy as $M\rightarrow\infty$.
\end{remark}
\begin{remark}
Increasing the diffusion coefficient $\hat{s}$ allows the system to explore a wider range of possibilities, but it also introduces greater noise into the control input. Therefore, $\hat{s}$ should be carefully selected to strike an appropriate balance between exploration and control noise.  
\end{remark}
The control input for each task in the hierarchy can be computed independently of the others. This allows for parallelization of the task hierarchy algorithm using GPUs, ensuring that the computational complexity remains manageable as additional tasks are introduced, provided a sufficient number of GPUs are available. As the number of agents increases, the dimensionality of the control inputs also grows. According to \cite{patil2024discrete}, in the case of the discrete-time path integral controller, the required sample size exhibits a \textit{logarithmic} dependence on the dimension of the control input. However, a formal analysis of sample complexity for the continuous-time path integral controller remains an open area for future research.

\section{Simulation Results}\label{Sec: simulations}
In this section, we present the simulation results for our proposed control framework.
\subsection{Single Agent Example}
Suppose a unicycle is navigating in a 2D space in the presence of an obstacle. The states of the unicycle model ${x}=[{p}_x \; {p}_y \; {s}\;\; \theta ]^\top$ consist of its $x-y$ position $p \coloneqq [{p}_x \; {p}_y]^\top$, speed ${s}$ and heading angle $\theta\in[0, 2\pi]$. The configuration of the system $q\coloneqq[{p}_x \; {p}_y \; \theta ]^\top$ consists of its position $p$ and the heading angle $\theta$. The system dynamics are given by the following equation:
\begin{equation}\label{unicycle}
    \begin{bmatrix}
        \dot{p}_x(t)\\ \dot{p}_y(t)\\ \dot{s}(t)\\ \dot{\theta}(t)
    \end{bmatrix} =  \begin{bmatrix} s(t)\cos (\theta(t)) \\ s(t)\sin (\theta(t))\\ 0\\ 0 \end{bmatrix} + \begin{bmatrix}
        0 & 0\\
        0 & 0\\
        1 & 0\\
        0 & 1
    \end{bmatrix} \begin{bmatrix} a(t)\\ \omega(t)  \end{bmatrix}.
\end{equation}
The control input $u\coloneqq[ a \; \omega ]^\top$ consists of acceleration $a$ and angular speed $\omega$. The simulation is set with $T=10$ seconds and $\Delta t = 0.01$ seconds. This unicycle system has to accomplish the following two tasks in descending order of importance: 1) obstacle avoidance and 2) move-to-goal. We design a proportional-derivative (PD) controller for the obstacle avoidance task and a path integral controller for the move-to-goal task.

\subsubsection{Obstacle-avoidance} 
Suppose the obstacle is placed at $c=[
    c_x \; c_y]^\top$ having radius $r$. 
In the presence of an obstacle in the advancing direction, the robot aims to keep it at a safe distance from the obstacle. The obstacle avoidance task becomes active when the robot is within a certain threshold distance and moving toward the obstacle. The threshold distance is greater than $r$ and is chosen arbitrarily. The task variable for obstacle avoidance $\sigma_1$ is defined as:
\begin{equation}\label{sigma1}
    \sigma_{1} =  \|p-c\|, 
\end{equation} 
and the desired value of the task variable be $\sigma_{1,d} =  r$. Differentiating \eqref{sigma1} with respect to $t$, we get 
\begin{equation}\label{sigma1_dot}
    \dot{\sigma}_1 = J_1v
\end{equation}
\begin{align*}
    J_{1} = \begin{bmatrix}
        \frac{p_{x}-c_x}{\|p-c\|} & \frac{p_{y}-c_y}{\|p-c\|} & 0
    \end{bmatrix},\quad v = \begin{bmatrix}
        s\cos\theta & s\sin\theta & \omega
    \end{bmatrix}^\top.
\end{align*}
In order to devise the control input for task 1 $u_1\coloneqq[ a_1 \; \omega_1 ]^\top$, we further differentiate \eqref{sigma1_dot} and get
\begin{align}\label{sigma1_ddot}
    \ddot{\sigma}_1 = \delta_1 + \Lambda_1\begin{bmatrix}
        a_1 & \omega_1
    \end{bmatrix}^\top,
\end{align}
 where $\delta_1$ and $\Lambda_1$ are defined as:
\begin{equation}\label{delta1}
\delta_1 = \frac{\left(p_y - c_y\right)^2 s^2 \cos^2\theta + \left(p_x - c_x\right)^2 s^2 \sin^2\theta} {\|p-c\|^3},    
\end{equation}
\begin{equation}\label{lambda1}
   \Lambda_1 = \begin{bmatrix}
        \frac{p_x-c_x}{\|p-c\|}\cos\theta + \frac{p_y-c_y}{\|p-c\|}\sin\theta \\ \frac{p_y-c_y}{\|p-c\|}s\cos\theta + \frac{p_x-c_x}{\|p-c\|}s\sin\theta
    \end{bmatrix}^\top.
\end{equation}
Adding the feedback term to \eqref{sigma1_ddot} similar to \eqref{q ddot PD}, we get
\begin{equation*}
    u_1 = \begin{bmatrix}
        a_1 \\ \omega_1
    \end{bmatrix} = \Lambda_1^\dagger\left(\ddot{\sigma}_{1,d} + K_{p,1} \widetilde{\sigma}_{1} + K_{d,1} \frac{d\widetilde{\sigma}_1}{dt} - \delta_1\right)
\end{equation*}
where $\widetilde{\sigma}_1 = \sigma_{1,d} - \sigma_{1} $, and $K_{p,1}$ and $K_{d,1}$ are the proportional and derivative gains respectively for task 1. 

\subsubsection{Move-to-goal} 
In this task, the robot must reach the goal position  $p_g \coloneqq [
    p_{g,x} \; p_{g, y}]^\top$. The task variable $\sigma_2$ is given as
\begin{equation}\label{sigma2}
    \sigma_2 = p = \begin{bmatrix}
        p_x \\p_y
    \end{bmatrix}
\end{equation}
and the desired value of the task variable will be $\sigma_{2,d} = p_g$. \par
First, we design the control input $u_2 = [
    a_2 \; \omega_2]^\top$ using a PD controller similar to task 1. Differentiating \eqref{sigma2} with respect to time, we get 
\begin{equation}\label{sigma2_dot}
    \dot{\sigma}_2 = J_2v,\quad  J_{2} = \begin{bmatrix}
        1 & 0 & 0\\
        0 & 1 & 0
    \end{bmatrix},\quad v = \begin{bmatrix}
        s\cos\theta \\ s\sin\theta\\ \omega
    \end{bmatrix}.
\end{equation}
Differentiating \eqref{sigma2_dot} further we get  
\begin{align}\label{sigma2_ddot}
    \ddot{\sigma}_2 = \Lambda_2\begin{bmatrix}
        a_2 \\ \omega_2
    \end{bmatrix}, \quad  \Lambda_2 = \begin{bmatrix}
        \cos\theta & -s\sin\theta\\
        \sin\theta & s\cos\theta
    \end{bmatrix}.
\end{align}
Adding the feedback term to \eqref{sigma2_ddot} similar to \eqref{q ddot PD}, we get
\begin{equation*}
    u_2 = \begin{bmatrix}
        a_2 \\ \omega_2
    \end{bmatrix} = \Lambda_2^\dagger\left(\ddot{\sigma}_{2,d} + K_{p,2} \widetilde{\sigma}_{2} + K_{d,2} \frac{d\widetilde{\sigma}_2}{dt}\right)
\end{equation*}
where $\widetilde{\sigma}_2 = \sigma_{2,d} - \sigma_{2} $, and $K_{p,2}$ and $K_{d,2}$ are the proportional and derivative gains respectively for task 2. 

\begin{figure}
    \centering
      \begin{tabular}{c c}
     \!\!\!\!\!\!\!\!\includegraphics[scale=0.3125]{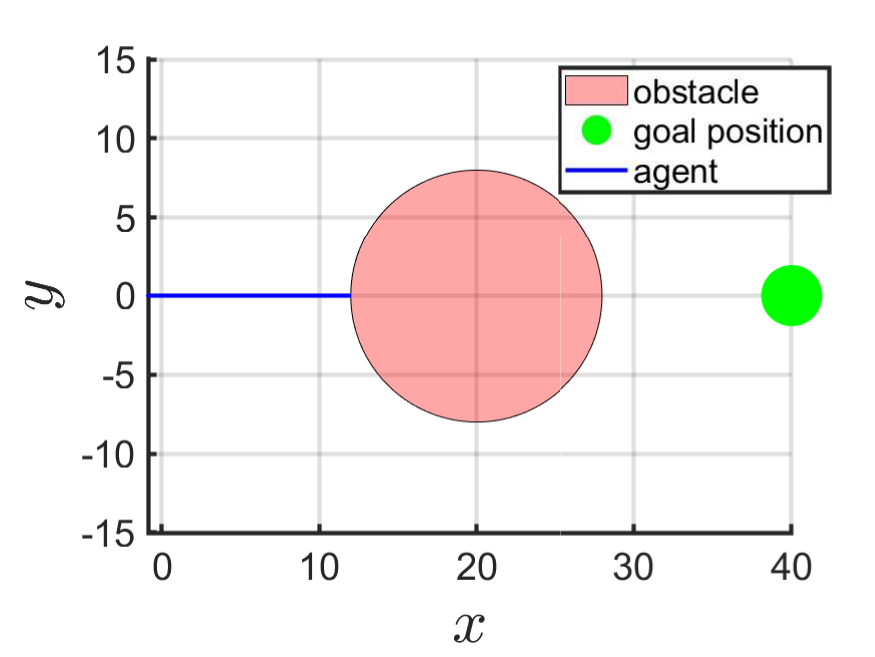} &\!\!\!\!\!\!\!\!\!\!\includegraphics[scale=0.3125]{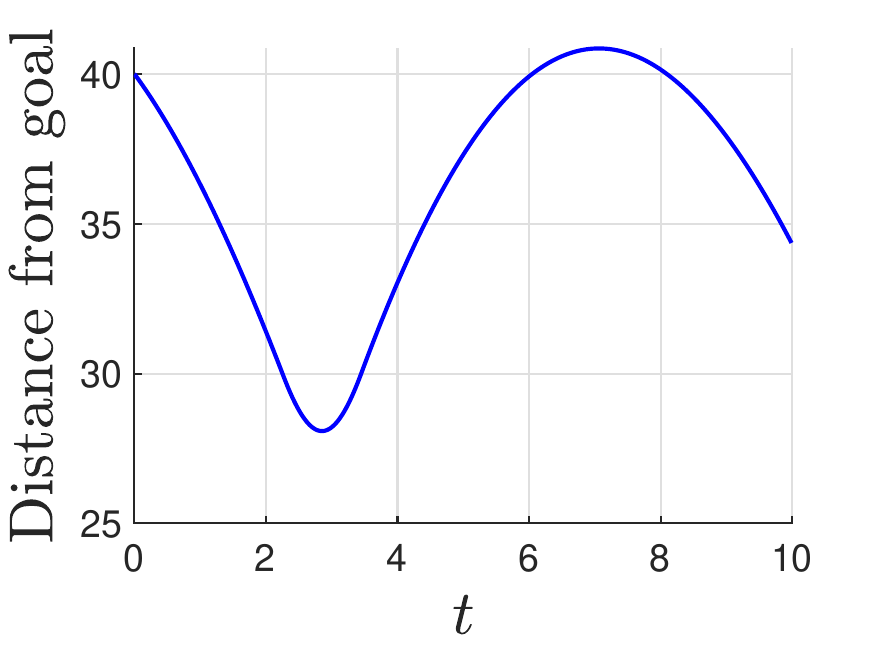} \\
     (a) Path followed without  & (b) Distance from the goal\\path integral controller & over time
      \end{tabular}
        \caption{Results of single-agent example without the path integral controller} 
        \label{Fig. single agent}
\end{figure}

Next, we design the control input $u_2$ via the path integral controller. The perturbed dynamics are given by \eqref{SDE} with the diffusion coefficient $\hat{s}=0.1$. We formulate the cost function \eqref{PI cost} with 
\begin{align*}
    L(x(t)) =  0.07\norm{p(t) - p_g },\;
    \phi(x(T)) = L(x(T)), \; \alpha =10.
\end{align*}
The number of Monte Carlo samples has been set to $10^4$. \par
Lastly, the control input $u_2$ is projected onto the null space of task 1 and we get the final control input as $ u = u_1 + (I - \Lambda_1^\dagger\Lambda_1)u_2$ where $I$ is the identity matrix of suitable dimensions. 
\begin{figure}
    \centering
      \begin{tabular}{c c}
     \!\!\!\!\!\!\!\!\!\!\includegraphics[scale=0.32]{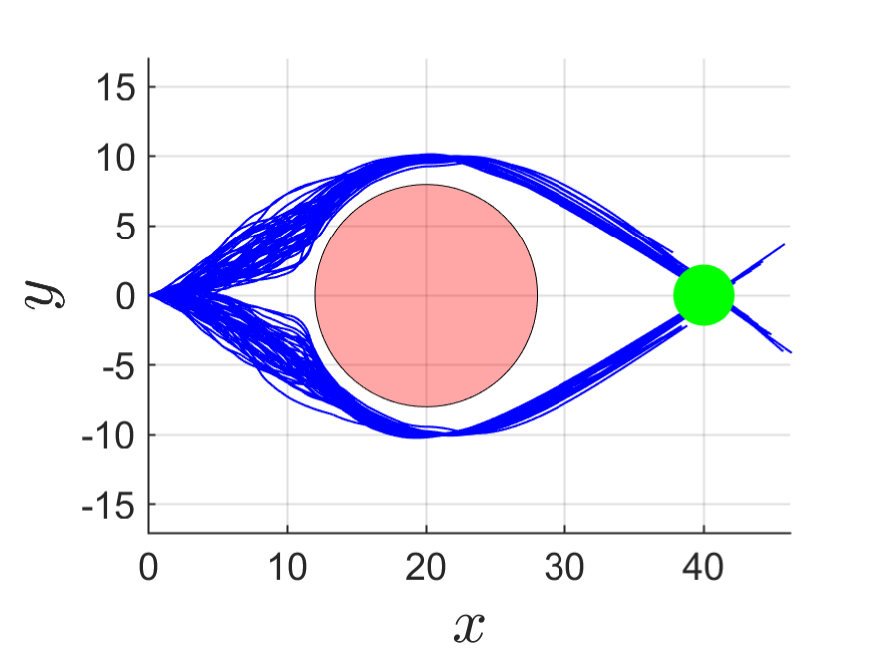} &\!\!\!\!\!\!\!\!\!\!\!\!\includegraphics[scale=0.32]{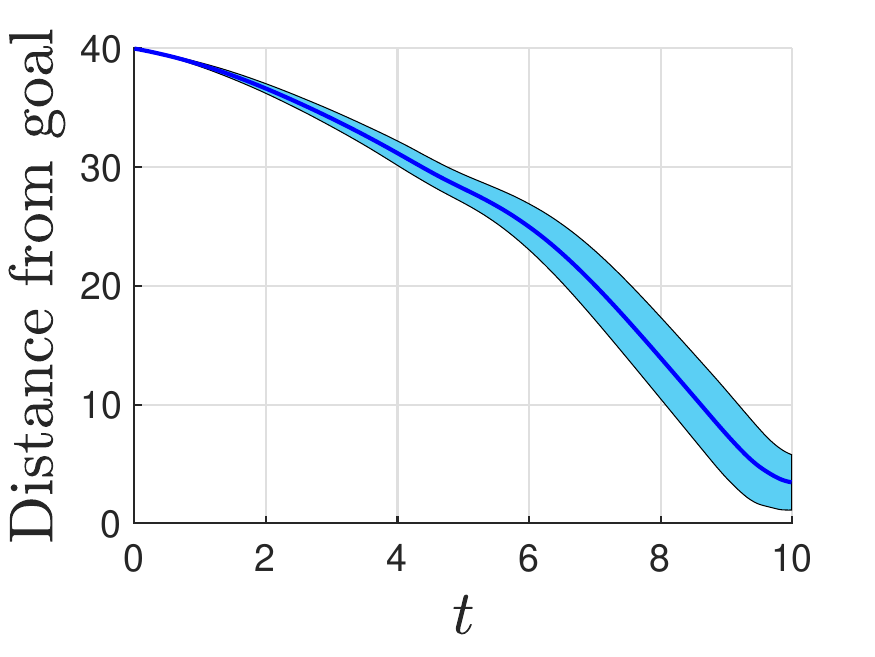} \\
     (a) Paths followed with  &\!\!\!\!\!\!\!\!\!\!\!\!\!\!\!\! (b) Mean distance from goal\\path integral controller &\!\!\!\!\!\!\!\!\!\!   $\pm$ standard deviation
      \end{tabular}
        \caption{Results of single-agent example using the path integral controller} 
        \label{Fig. single agent PI}
\end{figure}\par
Fig. \ref{Fig. single agent}(a) represents the results of the hierarchical controller when both tasks are controlled using PD controllers. As we can see, the robot initially moves towards the goal however, when it comes in the vicinity of the obstacle it drives itself away from the obstacle, as the obstacle avoidance task has a higher priority. The PD controllers get stuck in a local minima (the robot doesn't converge to the goal) and the robot keeps oscillating as shown in Fig. \ref{Fig. single agent}(b).\par
In Fig. \ref{Fig. single agent PI}, we show the results of the hierarchical controller when the first task is designed using a PD controller and the second task is designed using a path integral controller. Note that the path integral controller is designed by perturbing the control input $\widetilde{u}$ via a Brownian motion \eqref{SDE}. Therefore, the outcome of the path integral controller is stochastic. In Fig. \ref{Fig. single agent PI}(a), we plot 100 trajectories obtained by the devised hierarchical controller. We can see that all the trajectories successfully avoid the obstacle by going around it and reaching the goal without getting stuck in local minimas. Fig. \ref{Fig. single agent PI}(b) shows the mean distance to the goal over time, along with the standard deviation across the 100 trajectories. The solid blue line represents the mean over the 100 trajectories and the sky-blue shaded area represents one standard deviation from the mean. 

\subsection{Two Agents Example}
Suppose two unicycles are navigating in a 2D space in the presence of an obstacle. The unicycle dynamics follow the form in \eqref{unicycle} where the state for each unicycle is defined as ${x}^{(i)}=[{p}_{x}^{(i)} \; {p}_{y}^{(i)} \; {s}^{(i)}\;\; \theta^{(i)} ]^\top$, the configurations as $q^{(i)}=[{p}_{x}^{(i)} \; {p}_{y}^{(i)} \; \theta^{(i)} ]^\top$, and the control inputs as $u^{(i)}=[ a^{(i)} \; \omega^{(i)} ]^\top$ for $i=\{1,2\}$. The simulation is set with $T=10$ seconds and $\Delta t = 0.1$ seconds. This two-unicycle system has to accomplish the following three tasks in descending order of importance: 1) obstacle avoidance, 2) steering the centroid of the robots toward a goal position and 3) maintaining a specific distance between the two unicycles. We design a proportional-derivative (PD) controller for tasks 1) and 3), and a path integral controller for task 2).

\subsubsection{Obstacle-avoidance} 
Suppose the obstacle is placed at $c=[
    c_x \; c_y]^\top$ having radius $r$.
Similar to the single agent case, the obstacle avoidance task becomes active when any of the robots is within a certain threshold distance and moving toward the obstacle. Let $\sigma_{1}^{(i)}$ represent the task variable for the robot $i$ and $\sigma_{1,d}$ represent the desired task value for both the robots. $\sigma_{1}^{(i)}$ and $\sigma_{1,d}$ can be written as 
\begin{align}\label{sigma1 2agent}
    \sigma_{1}^{(i)} =  \|p^{(i)}-c\|, \quad i=\{1,2\},\quad     \sigma_{1,d} =  r.
\end{align} 
Similar to the single agent case, we differentiate \eqref{sigma1 2agent} twice with respect to time for both robots and get 
\begin{align}\label{sigma1_ddot 2agent}
    \ddot{\sigma}_{1}^{(i)} = \delta_{1}^{(i)} + \Lambda_{1}^{(i)}\begin{bmatrix}
        a_1^{(i)} & \omega_1^{(i)}
    \end{bmatrix}^\top
\end{align}
where $\delta_1^{(i)}$ and $\Lambda_1^{(i)}$ can be defined similar to \eqref{delta1}, \eqref{lambda1}:



Finally, adding the feedback term to \eqref{sigma1_ddot 2agent} similar to \eqref{q ddot PD}, we get the control input for obstacle avoidance:
\begin{equation*}
    u_1^{(i)} = \begin{bmatrix}
        a_1^{(i)} \\ \omega_1^{(i)}
    \end{bmatrix} = \Lambda_1^{(i)^\dagger}\!\!\!\left(\!\!\ddot{\sigma}_{1,d} + K_{p,1}^{(i)} \widetilde{\sigma}_{1}^{(i)} + K_{d,1}^{(i)} \frac{d\widetilde{\sigma}_1^{(i)}}{dt} - \delta_1^{(i)}\!\!\right)
\end{equation*}
where $\widetilde{\sigma}_1^{(i)} = \sigma_{1,d} - \sigma_{1}^{(i)} $, and $K_{p,1}^{(i)}$ and $K_{d,1}^{(i)}$ are the proportional and derivative gains respectively for task 1 and for $i=\{1,2\}$. Also, note that 
\begin{equation*}
    \Lambda_1 = \begin{bmatrix}
        \Lambda_1^{(1)} & 0\\
        0 &  \Lambda_1^{(2)}
    \end{bmatrix}, \quad \Lambda_1^\dagger = \begin{bmatrix}
        \Lambda_1^{(1)^\dagger} & 0\\
        0 &  \Lambda_1^{(2)^\dagger}
    \end{bmatrix}.
\end{equation*}

\subsubsection{Steering the robots' centroid toward a goal position}
This task involves steering the centroid of the robots toward a goal position $p_g$. The task variable $\sigma_2$ is given as
\begin{equation}\label{sigma2 2agents}
    \sigma_2 = \frac{p^{(1)} + p^{(2)}}{2}
\end{equation}
and the desired value of the task variable will be $\sigma_{2,d} = p_g$. \par
First, we design the control input $u_2 = [
    a_2^{(1)} \; \omega_2^{(1)} \; a_2^{(2)} \; \omega_2^{(2)}]^\top$ using a PD controller. Differentiating \eqref{sigma2 2agents} twice with respect to time we get 
\begin{align}\label{sigma2_ddot 2agents}
    \!\!\ddot{\sigma}_2 = \Lambda_2\!\!\begin{bmatrix}
        a_2^{(1)} \\ \omega_2^{(1)} \\ a_2^{(2)} \\ \omega_2^{(2)}
    \end{bmatrix}\!\!, \quad \!\!\!\!\! \Lambda_2 \!\!=\!\!\frac{1}{2} \!\!\begin{bmatrix}
        \cos\theta^{(1)} & \sin\theta^{(1)}\\
        -s^{(1)}\sin\theta^{(1)} & s^{(1)}\cos\theta^{(1)}\\
        \cos\theta^{(2)} & \sin\theta^{(2)}\\
        -s^{(2)}\sin\theta^{(2)} & s^{(2)}\cos\theta^{(2)}\\
    \end{bmatrix}^\top\!\!\!\!\!.
\end{align}
Adding the feedback term to \eqref{sigma2_ddot 2agents} similar to \eqref{q ddot PD}, we get
\begin{equation*}
    u_2 = \begin{bmatrix}
        a_2^{(1)} \\ \omega_2^{(1)}\\a_2^{(2)} \\ \omega_2^{(2)}
    \end{bmatrix} = \Lambda_2^\dagger\left(\ddot{\sigma}_{2,d} + K_{p,2} \widetilde{\sigma}_{2} + K_{d,2} \frac{d\widetilde{\sigma}_2}{dt}\right)
\end{equation*}
where $\widetilde{\sigma}_2 = \sigma_{2,d} - \sigma_{2} $, and $K_{p,2}$ and $K_{d,2}$ are the proportional and derivative gains respectively for task 2. \par

Next, we design the control input for task 2) via a path integral controller. The perturbed dynamics are given by \eqref{SDE} with the diffusion coefficient $\hat{s}=0.1$. We formulate the cost function \eqref{PI cost} with $\alpha =10$,
\begin{align*}
    \!\!L(x(t))\! = \! 0.21\!\!\norm{\frac{p^{(1)}(t) \!+ \!p^{(2)}(t)}{2} \!\!-\! p_g }\!,\;
    \phi(x(T)) = L(x(T)).
\end{align*}
The number of Monte Carlo samples has been set to $10^4$. 
\subsubsection{Maintaining a specific distance between two unicycles}
In this task, the two unicycles must maintain the distance $l$ between each other. The task variable $\sigma_3$ is given as
\begin{equation}\label{sigma3 2agents}
    \sigma_3 = \frac{1}{2}\left(p^{(1)} - p^{(2)}\right)^\top\left(p^{(1)} - p^{(2)}\right) 
\end{equation} 
and the desired value of the task variable will be $\sigma_{3,d} = \frac{l^2}{2}$.
Differentiating \eqref{sigma3 2agents} twice with respect to time we get 

\begin{equation}\label{sigma3_ddot 2agents}
  \ddot{\sigma}_{3} = \delta_{3} + \Lambda_{3}\begin{bmatrix}
        a_3^{(1)} & \omega_3^{(1)} & a_3^{(2)} & \omega_3^{(2)}
    \end{bmatrix}^\top
\end{equation}
\[\delta_3 \!= \!s^{(1)^2} \!\!\!-2s^{(1)}s^{(2)}\!\!\left(\!\cos\theta^{(1)}\!\!\cos\theta^{(2)}\!\!+\!\sin\theta^{(1)}\!\!\sin\theta^{(2)}\!\right)\!\!+\!\! s^{(2)^2},\] 
\begin{equation*}
  \!\!\! \Lambda_3 \!\! =\!\! \begin{bmatrix}
        \left(p_x^{(1)}\!\!-\!p_x^{(2)}\right)\cos\theta^{(1)}\!\! + \!\! \left(p_y^{(1)}\!\!-\!p_y^{(2)}\right)\sin\theta^{(1)}\\
        
        \left(p_x^{(2)}\!\!-\!p_x^{(1)}\right)s^{(1)}\sin\theta^{(1)} \!\!+ \!\!\left(p_y^{(1)}\!\!-\!p_y^{(2)}\right)s^{(1)}\cos\theta^{(1)}\\

        \left(p_x^{(2)}\!\!-\!p_x^{(1)}\right)\cos\theta^{(2)}\!\! + \!\!\left(p_y^{(2)}\!\!-\!p_y^{(1)}\right)\sin\theta^{(2)}\\

        \left(p_x^{(1)}\!\!-\!p_x^{(2)}\right)s^{(2)}\sin\theta^{(2)}\!\! +\!\! \left(p_y^{(2)}\!\!-\!p_y^{(1)}\right)s^{(2)}\cos\theta^{(2)}
    \end{bmatrix}^{\!\!\top}\!\!\!\!\!. 
\end{equation*}
Finally, adding the feedback term to \eqref{sigma3_ddot 2agents} similar to \eqref{q ddot PD}, we get
\begin{equation*}
    u_3 = \begin{bmatrix}
        a_3^{(1)} \\ \omega_3^{(1)}\\a_3^{(2)} \\ \omega_3^{(2)}
    \end{bmatrix} = \Lambda_3^\dagger\left(\ddot{\sigma}_{3,d} + K_{p,3} \widetilde{\sigma}_{3} + K_{d,3} \frac{d\widetilde{\sigma}_3}{dt}-\delta_3\right)
\end{equation*}
where $\widetilde{\sigma}_3 = \sigma_{3,d} - \sigma_{3} $, and $K_{p,3}$ and $K_{d,3}$ are the proportional and derivative gains respectively for task 3.\par

Lastly, the control input of the second task $u_2$ is projected onto the null space of task 1, and the control input of task 3 $u_3$ is projected onto the null spaces of both task 1 and task 2. This yields the overall control input as:
\begin{equation*}
    u = u_1 + (I - \Lambda_1^\dagger\Lambda_1)\left(u_2 + (I - \Lambda_2^\dagger\Lambda_2)u_3 \right)
\end{equation*}
where $I$ is the identity matrix of suitable dimensions. \par
In this experiment, we set the desired distance between the agents to $l = 0.5$ with the initial positions of the unicycles at $[
    p_x^{(1)} \; p_y^{(1)}\;  p_x^{(2)} \;  p_y^{(2)}]^\top = [
    -4.5 \;\; 0 \; -4 \;\; 0]^\top$. Fig. \ref{Fig. two agent}(a) represents the results of the hierarchical controller when all three tasks are controlled using PD controllers. As we can see, the centroid of the two unicycles initially moves towards the goal however, when the unicycles come in the vicinity of the obstacle they drive themselves away from the obstacle, as the obstacle avoidance task has a higher priority. Due to the limitations of the PD controllers, which get stuck in local minima, the unicycles begin to oscillate and never successfully cross the obstacle to reach the goal, as seen in Fig. \ref{Fig. two agent}(b).

\begin{figure}
    \centering
      \begin{tabular}{c c}
     \!\!\!\!\!\!\!\!\!\!\includegraphics[scale=0.32]{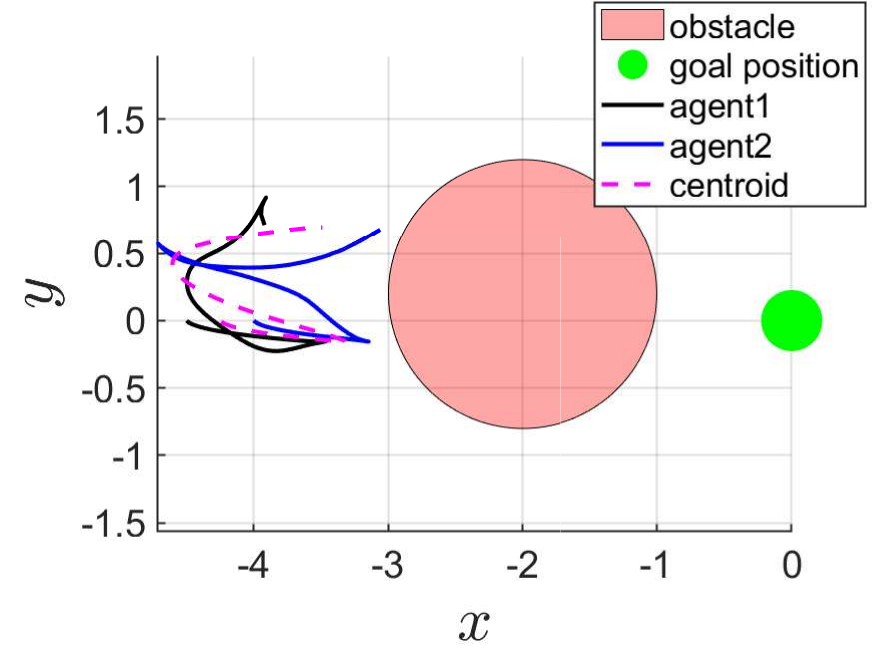} &\!\!\!\!\!\!\!\includegraphics[scale=0.32]{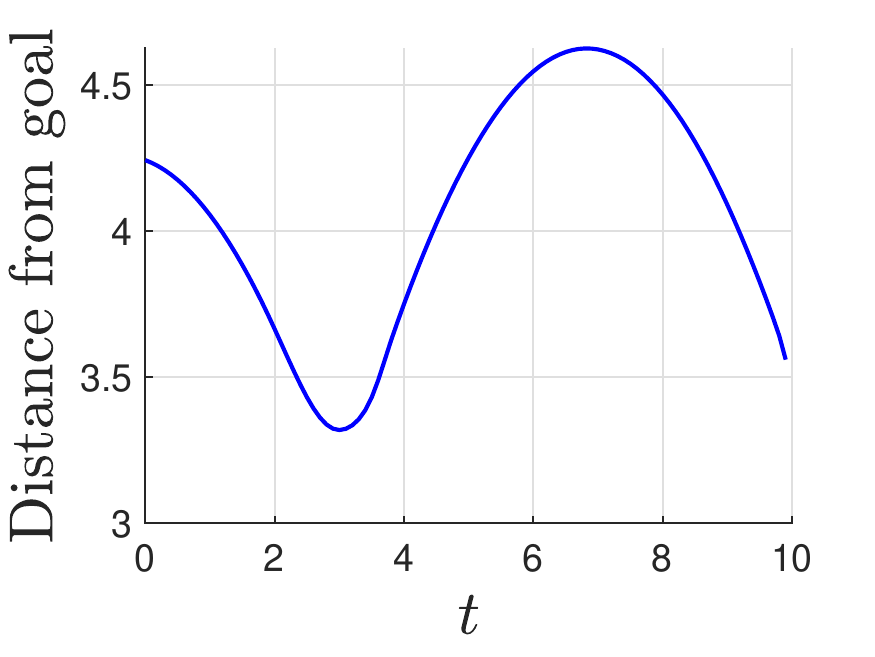} \\
     (a) Path followed without  & (b) Distance from the goal\\path integral controller & over time
      \end{tabular}
        \caption{Results of two-agents example without the path integral controller} 
        \label{Fig. two agent}
\end{figure}

In Fig. \ref{Fig. two agent PI}, we present the results using the hierarchical controller, where tasks 1) and 3) are managed with PD controllers, while task 2) is handled using a path integral controller. In Fig. \ref{Fig. two agent PI}(a), we plot 100 trajectories of the two agents and their centroid. The hierarchical controller successfully ensures obstacle avoidance and steers the centroid towards the goal in all trajectories, without getting stuck in local minima. Fig. \ref{Fig. two agent PI}(b) shows the mean distance of the centroid from the goal over time, along with the standard deviation across the 100 trajectories. The solid blue line represents the mean and the sky-blue shaded area represents one standard deviation from the mean. 

\begin{figure}
    \centering
      \begin{tabular}{c c}
     \!\!\!\!\!\!\!\!\!\!\includegraphics[scale=0.32]{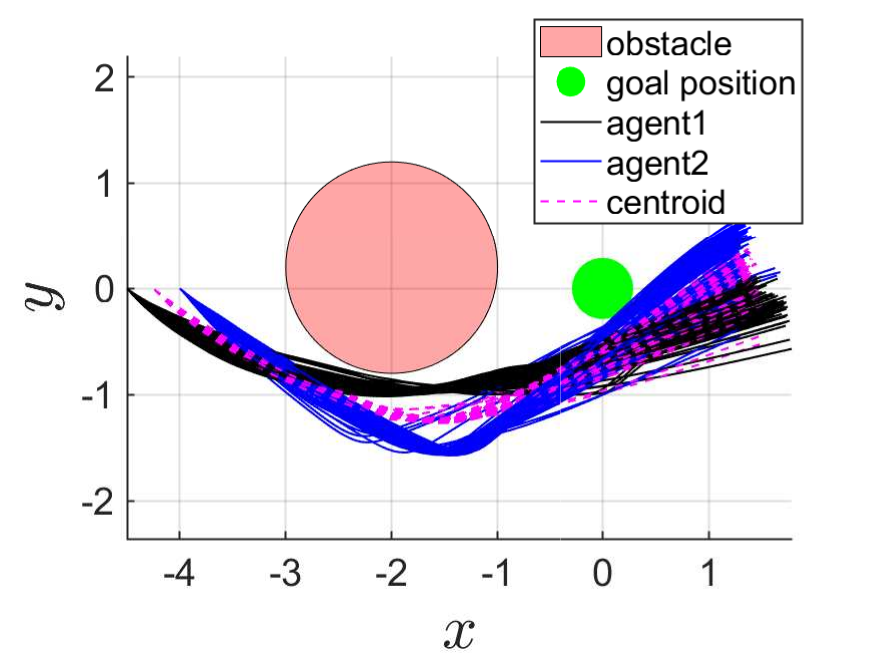} &\!\!\!\!\!\!\!\!\!\!\includegraphics[scale=0.32]{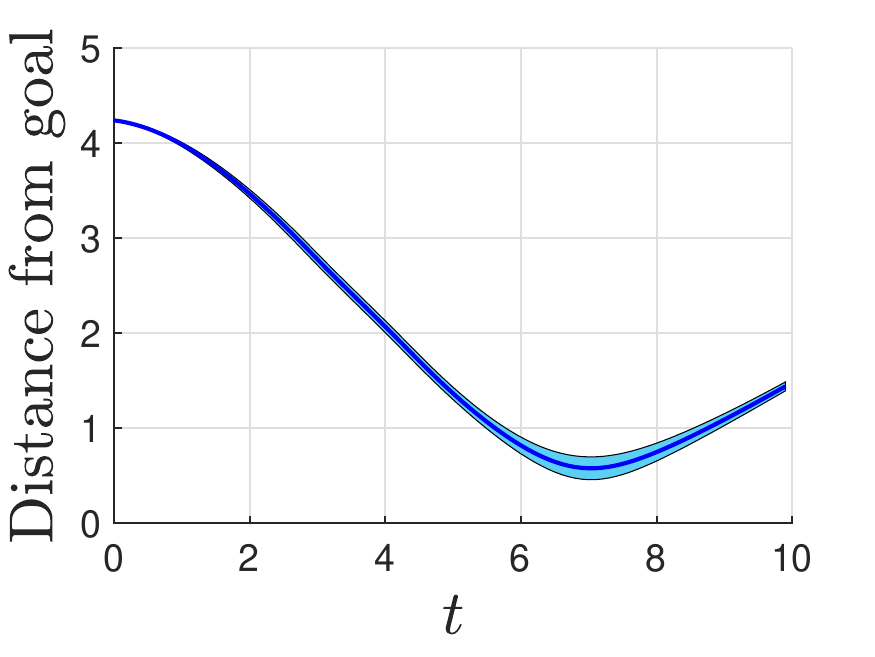} \\
     (a) Paths followed with  & \!\!\!\!\!\!\!\!\!\! (b) Mean distance from goal\\path integral controller & $\pm$ standard deviation
      \end{tabular}
        \caption{Results of two-agents example using the path integral controller} 
        \label{Fig. two agent PI}
\end{figure}\par

\begin{figure}
    \centering
      \begin{tabular}{c c}
     \!\!\!\!\!\!\!\!\!\!\includegraphics[scale=0.31]{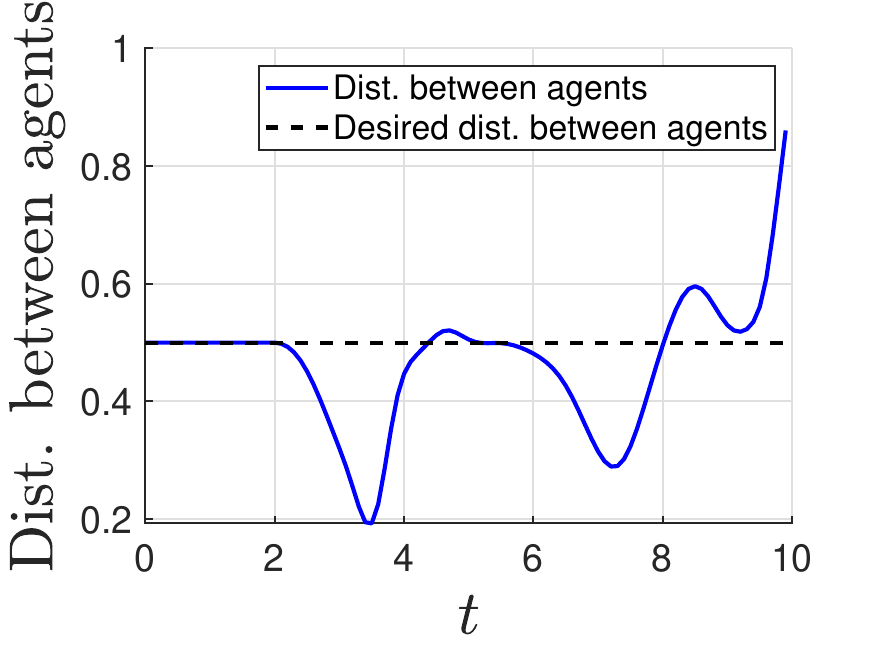} &\!\!\!\!\!\!\!\!\!\!\includegraphics[scale=0.31]{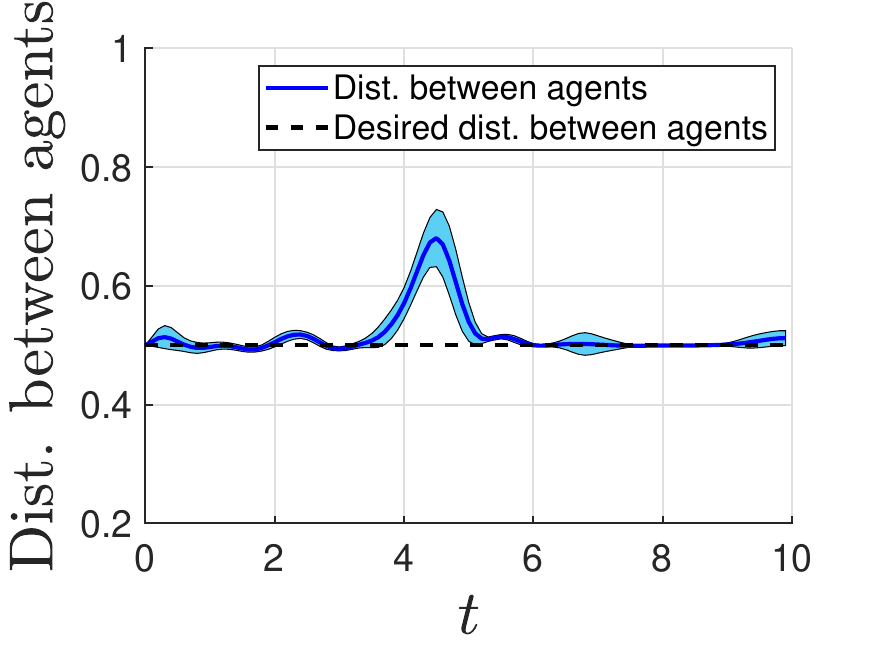} \\
     \!\!\!\!\!\!\!\!\!(a) Without path integral&\!\!\!\!\!\!   (b) With path integral \\controller: Distance between & controller: Mean distance  \\agents & between agents  \\  & $\pm$ standard deviation
      \end{tabular}
        \caption{Distance between agents over time} 
        \label{Fig. dist bet agents}
\end{figure}
In Fig. \ref{Fig. dist bet agents}(a), we show the distance between the agents over time when the controller for task 2) is designed using a PD controller. Fig. \ref{Fig. dist bet agents}(b) shows the mean distance between the agents over time, using the path integral controller for task 2). The solid blue line represents the mean across 100 trajectories, while the shaded area corresponds to one standard deviation. This figure highlights that the path integral controller helps the agents better maintain the desired distance between each other compared to the PD controller.

\section{Conclusion}
In this paper, we presented a novel control framework that combines the null-space projection technique with the path integral control method to solve hierarchical task control problems for robotic systems. The null-space projection allows for the prioritization of multiple tasks, ensuring that higher-priority tasks are always satisfied. We devised path integral controllers for more complex tasks in the hierarchy, and the remaining simpler tasks were accomplished using PID-like local controllers. The path integral controller effectively handles nonlinearity in the dynamics and the cost functions. The proposed framework overcomes the local minimas, which is a limitation of traditional local controllers, and successfully achieves multiple tasks in a hierarchical structure. \par 

For future work, we aim to incorporate the importance sampling techniques into the path integral controller, which offers improved sample efficiency compared to the standard approach. Another idea is use the Hamiltonian
dynamics to improve the sampling efficiency \cite{akshay2020hamiltonian}. Additionally, we plan to extend the application of this framework to large-scale multi-robot systems and robotic manipulators with a high number of degrees of freedom.






\bibliographystyle{IEEEtran}
\bibliography{bibliography}

\end{document}